\newcommand{\E}{\mathrm{E}}
\newtheorem{lemma}{Lemma}
\newtheorem{proposition}{Proposition}
\begin{document}

\title{MIM-GAN-based Anomaly Detection for Multivariate Time Series Data}
\vspace{1.5em}
\author{\normalsize{\IEEEauthorblockN{Shan Lu\IEEEauthorrefmark{1}, Zhicheng Dong\IEEEauthorrefmark{1}\IEEEauthorrefmark{2}, Donghong Cai\IEEEauthorrefmark{3}, Fang Fang\IEEEauthorrefmark{4}, and Dongcai Zhao\IEEEauthorrefmark{1}}
}

\IEEEauthorblockA{\IEEEauthorrefmark{1}College of Information Science Technology, Tibet University, Lhasa 850000, China.}

\IEEEauthorblockA{\IEEEauthorrefmark{3}College of Information Science and Technology, Jinan University, Guangzhou 510632, China.}

\IEEEauthorblockA{\IEEEauthorrefmark{4}Department of Electrical and Computer Engineering, Western University, London N6A5B9, Canada.}

\IEEEauthorblockA{\IEEEauthorrefmark{2}Tibet Autonomous Region Plateau Complex Environmental Information Transmission and Processing Engineering Research Center; Advanced Technology Research Center of Tibet University, Lhasa 850000, China.}

\IEEEauthorblockA{Zhicheng Dong is the corresponding author of this paper.}

\IEEEauthorblockA{Email: explorerlu@utibet.edu.cn;  dongzc666@163.com; dhcai@jnu.edu.cn; fang.fang@uwo.ca; zdc@utibet.edu.cn.  }



}

%



\maketitle
\vspace{-2.0em}

\begin{abstract}

The loss function of Generative adversarial network (GAN) is an important factor that affects the quality and diversity of the generated samples for anomaly detection. In this paper, we propose an unsupervised multiple time series anomaly detection algorithm based on the GAN with message importance measure (MIM-GAN). In particular, the time series data is divided into subsequences using a sliding window. Then a generator and a discriminator designed based on the Long Short-Term Memory (LSTM) are employed to capture the temporal correlations of the time series data. To avoid the local optimal solution of loss function and the model collapse, we introduce an exponential information measure into the loss function of GAN. Additionally, a discriminant-reconstruction score is composed of discrimination and reconstruction loss. The global optimal solution for the loss function is derived and the  model collapse is proved to be avoided in our proposed MIM-GAN-based anomaly detection algorithm. Experimental results show that the proposed MIM-GAN-based anomaly detection algorithm has superior performance in terms of precision, recall, and F1-score.

\end{abstract}
 \vspace{-0.5em}
\begin{IEEEkeywords}
	Anomaly detection, message information measure, multivariate time series data, model collapse, message identification divergence.
\end{IEEEkeywords}

 \vspace{-0.5em}
\section{Introduction}
\vspace{0.5em}

 The Internet of Things (IoT)\cite{8920092} has been developing rapidly
and widely applied in various Cyber-Physical Systems (CPSs)\cite{7911887}, such as environmental supervision, smart factory, smart agriculture, data centers, etc.. These intelligent platforms usually employ a large number of sensors to monitor the operating status of devices by processing a large amount of time-series data. Especially, many of these CPSs are designed for critical missions. Consequently, the CPSs are prime targets of cyber attacks. Data anomaly detection plays a crucial role in safeguarding CPSs.

However, multiple challenges arise in the anomaly detection in multivariate time series data. For instance, traditional threshold-based anomaly detection methods can only handle data that is highly correlated and follows a multivariate Gaussian distribution\cite{9796836}, but it is difficult to process the multiple time series due to multiple variables, multiple features, and time-space correlation. Supervised machine learning methods\cite{9760098} cannot use a large amount of time series because this sequences are lack of labeled data. Unsupervised machine learning methods\cite{9179547}-\cite{2306789} have not only fully utilized the temporal, spatial correlations, but also the dependencies between multivariate data to detect abnormal data.

Recently, Choi Yeji et al.\cite{9070362} proposed  a novel Generative adversarial network (GAN) based anomaly detection and a localization framework for power plant data, which can transform multivariate time series into two-dimensional (2D) images, and use a GAN model to learn the mapping from a series of distance images to the next distance image. The encoder and decoder structure of the GAN model can capture both the temporal information and the correlation between different variables. However, the GAN-based anomaly detection may suffer from mode collapse or instability issues, which can affect the quality of the generated images and the anomaly detection performance. Furthermore, Martin Arjovsky et al. \cite{8825555} introduced Wasserstein GAN (W-GAN) as a solution to address the challenges encountered in GANs, including model instability, slow convergence, and mode collapse. The W-GAN employs Wasserstein distance as the loss function, which can avoid gradient vanishing, and improve the stability as well the convergence. However, the W-GAN increases the computational overhead, and requires tuning the gradient penalty coefficient.
Jiaxuan You et al.\cite{you2020stad} proposed GAN-based spatio-temporal anomaly detection (STAD-GAN), which uses a convolutional neural network to extract features. In addition, STAD-GAN requires a large amount of time and capacity to train GAN and neural network classifier, which may limit its applicability in real-world environments. During the training process, multiple hyperparameters need to be adjusted to balance the competitive relationship between the generator and discriminator.

To overcome the limitations of the existing research methods, we propose an anomaly detection algorithm based on the GAN with message importance measure(MIM-GAN), which uses a exponential function instead of a logarithmic function. Specifically, the conventional cross-entropy loss based on binary classification can lead to gradient vanishing in the generator and impede model training. We introduce the loss function in exponential form\cite{9405665}. However, the generator and discriminator cannot engage in adversarial training using the Adam optimizer. Hence, we replace the generator’s optimizer of MIM-GAN by AdamW algorithm. The global optimal solution of loss function in MIM-GAN is derived and the mode collapse is proved to be avoided. Moreover, a Discriminant Reconstruction-Score (DIRE-score) consisting of the discrimination loss and reconstruction loss is considered. The  MIM-GAN-based algorithm for anomaly detection can improve the diversity and richness of the generated samples. The experimental results represent that MIM-GAN-based algorithm for anomaly detection has superior performance compared to the existing GAN-based anomaly detection methods.
\begin{figure}[tp]
	\centering
	\vspace*{0.5em}
	\includegraphics[width=0.89\linewidth]{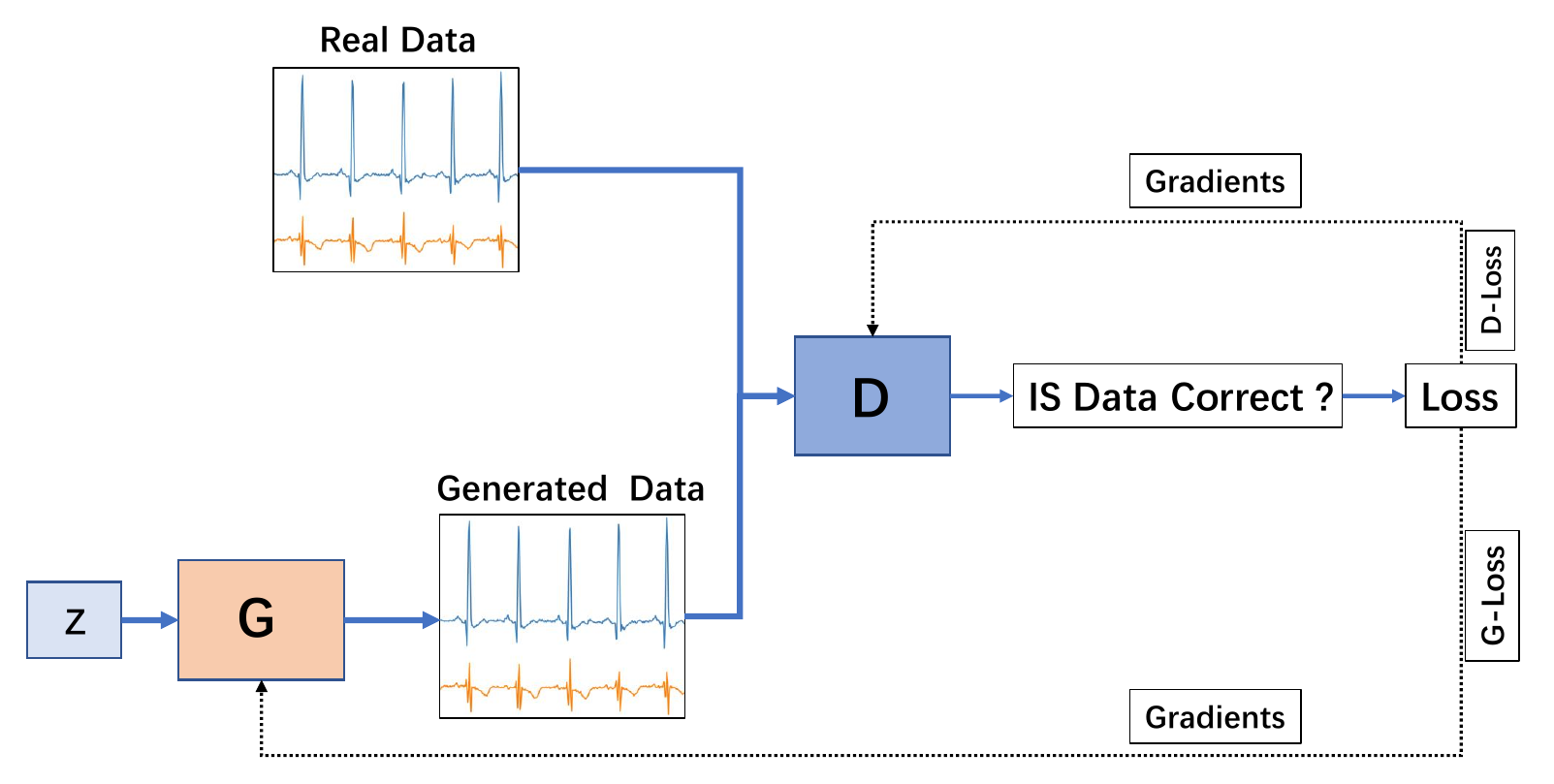}
	\vspace*{-0.5em}
	\caption{Illustration of anomaly detection for time series data based on GAN.}\label{abfl11}
\end{figure}
\section{Problem Formulation}
As we can see in Fig. \ref{abfl11}, the GAN network for anomaly detection of multivariate time series data, which consists of two neural networks: G and D. G takes noise $z$ as input and tries to generate fake data that resembles the distribution of the training data. D tries to determine whether the generated time series are real or fake. G is for the purpose of to make higher failure rate of the discriminator. The GAN network can be designed based on the following optimization problem:
\begin{align}
\min\limits_{\mathrm{D}}\space\max\limits_{\mathrm{G}}\space \mathcal{L}_{{KL\bf}}(\mathrm{D},\mathrm{G}),
\end{align}
where the objective function $\mathcal{L}_{KL}(D, G)$ is defined as
\begin{align}\label{esjdk1}
\mathcal{L}_{\mathrm{KL}}(D,\! G)&\!=\!E_{x \sim P_{data}}\![\log (D(x))]\!+\! E_{z \sim P_{G}}[\log[1\!-\!D(G(z))] \nonumber \\
\!\!&\!\!\!\!\!\!\!\!\!\!\!\!\!\!\!\!=E_{x \sim P_{data}}[\log (D(x))]\!+\!E_{x \sim P_{G}}[\log (1-D(x))],
\end{align}
where $P_{data}$ denotes the distribution of real data and $P_{G}$ is the distribution for generated data.

The GAN is employed for anomaly detection of multiple time series data, because it can learn the distribution from a large number of unlabeled normal data, and identify the abnormal data by reconstruction error or discrimination error. However, the GAN-based anomaly detection still has the following challenges:
\begin{itemize}
  \item The training of GANs is known for its instability, often resulting in model collapse or failure to converge during training. Consequently, both the generator and discriminator may struggle to effectively capture the diversity and complexity of the data. Model collapse, in particular, poses a significant challenge in GAN generation due to its inherent characteristics, such as high dimensionality, nonlinearity, and dynamics. Successfully addressing these challenges requires GANs to possess robust modeling and generalization capabilities.
    \item
The presence of local optimal solutions for problem \eqref{esjdk1} severely restricts the application scope and scenarios of GAN-based generation for multivariate time series data. This limitation becomes especially troublesome in application domains that require diverse and high-quality data. GANs are expected to generate multivariate time series data that exhibits both high quality and diversity.
  \item The design of the GAN loss function contributes to its training difficulties, often resulting in insufficient diversity in the generated samples. Traditional adversarial networks utilize binary cross-entropy loss may further exacerbate the problem by causing the generator gradient to vanish, making the model challenging to train effectively.
\end{itemize}

To address the limitations of the aforementioned research methods and to accommodate the temporal relationship and dependencies among variables, we propose an innovative algorithm for anomaly detection in multivariate time series data. Our approach, termed MIM-GAN, aims to overcome the challenges encountered in training traditional GAN models. By incorporating information importance measurement, the proposed MIM-GAN anomaly detection offers an effective solution to enhance the detection performance and capture the variety inherent in multivariate time series data.

\section{ Anomaly Detection for Multivariate Time Based on MIM-GAN}
In this section, we propose an MIM-GAN-based anomaly detection algorithm, which consists of MIM-GAN network training and MIM-GAN-based anomaly detection. For the network training, we consider the exponential loss function to obtain a global optimal solution and avoid model collapse. Furthermore, we use DIRE-Score that calculates both reconstruction loss and discrimination loss to identify abnormal data in the anomaly detection process.
\begin{figure*}[t]
	\centering
	\vspace*{0.5em}
	\includegraphics[width=0.9\linewidth]{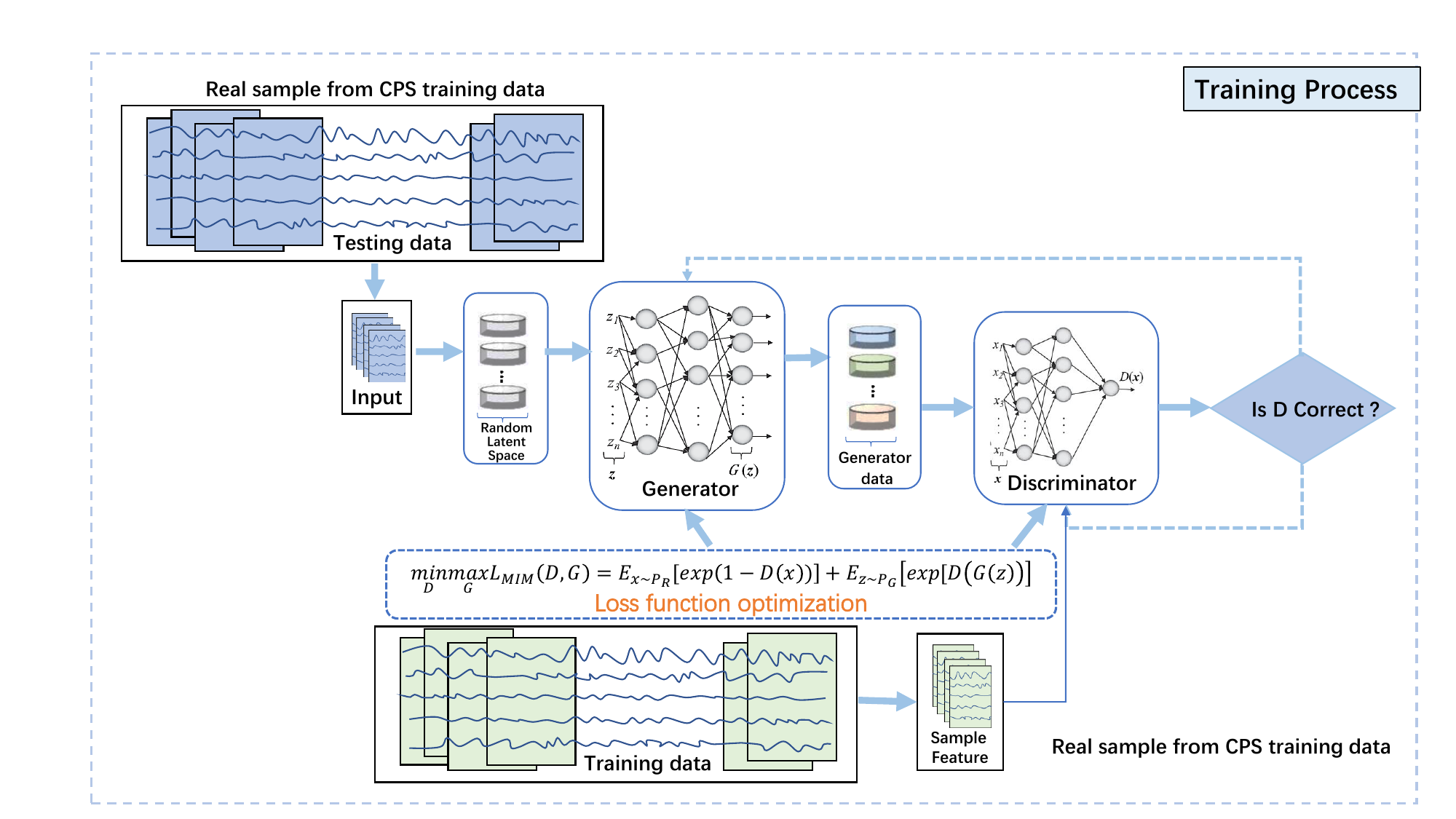}
	\vspace*{-0.5em}
	\caption{Illustration of MIM-GAN training.}\label{abf2222}
\end{figure*}
\subsection{MIM-GAN Training}
As illustrated in Fig. \ref{abf2222}, we employ Long Short Term Memory (LSTM) networks for both G and D in our model. The G takes random time series data and generates synthetic time series sequences. Meanwhile, the D's role is to distinguish between the synthetic time series and the real normal time series from the training data.
During the training process, the D becomes proficient at accurately classifying real and synthetic time series. On the other hand, the G's objective is to deceive the discriminator by generating synthetic time series that closely resemble real ones. After numerous iterations, the D becomes adept at classifying time series into real or fake categories, while the G learns the implicit multivariate distribution of the normal state from the training data. This dynamic interplay between the G and D enables the GAN to effectively learn and generate multivariate time series data with realistic characteristics.

Note that the MIM-GAN model is a GAN that incorporates the information measurement\cite{9405665}, where the logarithmic function of GAN is replaced by an exponential function. In particular, the traditional logarithmic function used in standard GANs is replaced with an exponential function. This alteration introduces potential advantages, particularly due to the amplification effect observed in MIM-GAN.

The replacement of the logarithmic function with the exponential function in the objective function leads to the concept of message identification (M-I) divergence. The exponential function's sensitivity allows MIM-GAN to discern information differences more effectively. As a result, it exhibits distinct characteristics in information representation compared to logarithmic functions.

It is worth emphasizing that the exponential function is convex, which can contribute to improved optimization properties and potentially lead to more stable training dynamics in the MIM-GAN model. We can convert the maxmin optimization problem into the equivalent minmax optimization problem:
\begin{align}\label{minmaxa12}
\min\limits_{\mathrm{D}}\space\max\limits_{\mathrm{G}}\space \mathcal{L}_{{MIM\bf}}(\mathrm{D},\mathrm{G}),
\end{align}
where the loss function $\mathcal{L}(D, G)$ is defined as
\begin{align}\label{sldoe4}
\!\!\!\mathcal{L}_{\mathrm{MIM}}(D,\! G)
&\!\!\!=\!\!\E_{x \sim P_{R}}[\exp (1\!-\!\!D(x))]\!\!+\!\!\ E_{z \sim P_{G}}[\exp[D(G(z))] \nonumber \\
&\!\!\!\!\!\!\!\!\!\!=\!\E_{x \sim P_{R}}\![\exp (1\!-\!D(x))]\!+\!\!\E_{x \sim P_{G}}[\exp (D(x))],
\end{align}

  Especially, exponential functions and logarithmic functions have different characteristics in information representation. It is important to point out that the the exponential loss function is convex, thus we can obtain a global optimal solution.
\subsubsection{Global Optimal Solution of Loss Function}
A local optimal solution refers to a state where the GAN or GAN-based network falls into a local optimal state during the training process. Then the generator or discriminator is unable to improve their performance, but stays at a suboptimal level.
The training process of GAN is a non-convex optimization problem. This may affect the efficiency of using GAN for anomaly detection, because the goal of anomaly detection is to identify the features of data that are inconsistent with the distribution of normal data.
If GAN training can achieve global optimal solution, then the balance between G and D will be maintained. G is able to generate data that matches the distribution of the real data, while D is able to accurately distinguish real data from generated data. It is benefit to use GAN for anomaly detection, because D can serve as an effective anomaly detector by treating generated data as normal data and identifying abnormal points in real data as anomalous.

The optimal solution of the loss function is obtained under the condition of the optimal discriminator, which is given by the following Lemma.
\begin{lemma}\label{lemmma11}
The optimal discriminator $D_{\text {MIM }}^{*}(x)$ with a given generator G is given by
\begin{align}
&D_{\text {MIM }}^{*}(x)=\frac{1}{2}+\frac{1}{2} \ln \left\{\frac{P_{R}(x)}{P_{G}(x)}\right\},
\end{align}
where $P_{R}(x)$ represents the distribution of real data, $P_{G}(x)$ represents the distribution of generated data.
\end{lemma}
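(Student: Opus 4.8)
The plan is to fix the generator $G$ and minimize $\mathcal{L}_{\mathrm{MIM}}(D,G)$ over the discriminator by reducing to a pointwise optimization. First I would rewrite both expectations in the loss \eqref{sldoe4} as integrals against the underlying measure and combine them into a single integral
\begin{align}
\mathcal{L}_{\mathrm{MIM}}(D,G)=\int_{x}\bigl[P_{R}(x)\,e^{1-D(x)}+P_{G}(x)\,e^{D(x)}\bigr]\,\D x.
\end{align}
Because $D$ may be chosen independently at each point $x$, minimizing this integral over all admissible discriminators is equivalent to minimizing the integrand separately for every fixed $x$.

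Next, for a fixed $x$ I would set $y=D(x)$ and study the scalar function $f(y)=P_{R}(x)\,e^{1-y}+P_{G}(x)\,e^{y}$. Differentiating gives $f'(y)=-P_{R}(x)\,e^{1-y}+P_{G}(x)\,e^{y}$, and the stationarity condition $f'(y)=0$ reduces to $e^{2y-1}=P_{R}(x)/P_{G}(x)$. Solving for $y$ yields
\begin{align}
y=\frac{1}{2}+\frac{1}{2}\ln\!\left\{\frac{P_{R}(x)}{P_{G}(x)}\right\},
\end{align}
which is exactly the claimed $D_{\mathrm{MIM}}^{*}(x)$. To confirm this critical point is the global minimizer rather than a saddle or maximum, I would compute $f''(y)=P_{R}(x)\,e^{1-y}+P_{G}(x)\,e^{y}>0$ wherever $P_{R}(x)+P_{G}(x)>0$; strict convexity then guarantees that the stationary point is the unique minimum. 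This is precisely the convexity of the exponential loss emphasized earlier in the paper.

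The main obstacle, though a modest one, is justifying the interchange of minimization and integration, i.e.\ that minimizing the integrand pointwise indeed produces the minimizer of the entire functional. This step is standard but relies on $D$ ranging over a sufficiently rich class—effectively all measurable functions, so that the pointwise-optimal values assemble into an admissible discriminator—mirroring the analogous argument in the classical optimal-discriminator derivation for GANs in \eqref{esjdk1}. A secondary technical care is the treatment of points where $P_{G}(x)=0$, at which the logarithm diverges; these can be handled by restricting attention to the support of $P_{G}$, since they contribute negligibly to the subsequent min--max analysis.
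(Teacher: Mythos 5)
Your proposal is correct and follows essentially the same route as the paper's proof: rewrite the loss as a single integral, minimize the integrand pointwise by setting $u=D(x)$, solve the first-order condition $-P_{R}(x)e^{1-u}+P_{G}(x)e^{u}=0$, and confirm the minimum via the positive second derivative (convexity). Your added remarks on justifying the pointwise interchange and on points where $P_{G}(x)=0$ are refinements the paper glosses over (it simply assumes $P_R(x)>0$ and $P_G(x)>0$), but they do not change the argument.
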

\begin{proof}
The loss function in \eqref{sldoe4} can be formulated as
\begin{align}
\mathcal{L}_{MIM}(D,G)=\int_{\mathcal{X}}[P_{R}(x)e^{1-D(x)}+P_{G}(x)e^{D(x)}]dx,
\end{align}
where $\mathcal{X}$ is a domain for $x$. Let $a=P_R(x)>0$, $b=P_G(x)>0$, and $u=D(x)$, then we define a function $F(u)$ as
\begin{align}
F(u)=a\ \mathrm{exp}(1-u)+b\ \mathrm{exp}(u),
\end{align}
Note that $F(u)$ is a continuously differentiate function, and its second derivative satisfies
\begin{align}
\frac{\partial^2(F(u))}{\partial u^2}=a\ \mathrm{exp}(1-u)+b\ \mathrm{exp}(u)>0,
\end{align}
 This means that $F(u)$ is convex with respect to $u$. Let the first derivative be equal to zero, i.e.,
\begin{align}
\frac{\partial F(u)}{\partial u}=-a\ \mathrm{exp}(1-u)+b\ \mathrm{exp}(u)=0,
\end{align}
we obtain the optimal solution $u=\frac{1}{2}+\frac{1}{2}\ln(\frac{a}{b})$. Since $a>0$ and $b>0$, $\frac{1}{2}+\frac{1}{2}\ln(\frac{a}{b})$ is the minimum value of function $F(u)$. As $u=D(x)$, the optimal solution of discriminator is $D_{MIM}^\ast\left(x\right)=\frac{1}{2}+\frac{1}{2}\ln{\left\{\frac{P_R\left(x\right)} {P_G \left(x \right)} \right \}}$.
 \end{proof}
With the optimal solution $D_{\mathrm{MIM}}^{*}(x)$ in Lemma \ref{lemmma11} and a generator G, then the loss function $\mathcal{L}_{\mathrm{MIM}}(D, G)$ in \eqref{minmaxa12} can be written as
\begin{align}\label{sldp2}
&\mathcal{L}_{\mathrm{MIM}}\left(D=D_{\mathrm{MIM}}^{*}, G\right)=\mathbb{E}_{x\sim P_{R}}\left[\exp \left(1-D_{\mathrm{MIM}}^{*}(x)\right)\right]\nonumber \\
&+\mathbb{E}_{x\sim P_{G}}\left[\exp \left(D_{\mathrm{MIM}}^{*}(x)\right)\right]\nonumber \\
&=\mathbb{E}_{x\sim P_R}\left[\exp \left(\frac{1}{2}+\ln \left(\frac{P_{R}(x)}{P_{\mathrm{G}}(x)}\right)^{-\frac{1}{2}}\right)\right]\nonumber \\
&+\mathbb{E}_{x \sim P_{G}}\left[\exp \left(\frac{1}{2}+\ln \left(\frac{P_{R}(x)}{P_{\mathrm{G}}(x)}\right)^{\frac{1}{2}}\right)\right]\nonumber \\
\!\!\!\!\!\!\!&=\sqrt{\mathrm{e}}\left\{\mathbb{E}_{x\sim P_R}\!\!\!\left[\left(\frac{P_{R}(x)}{P_{\mathrm{G}}(x)}\right)^{-\frac{1}{2}}\right]\!+\!\!\mathbb{E}_{x \sim P_{G}}\!\!\left[\left(\frac{P_{\mathrm{G}}(x)}{P_{R}(x)}\right)^{-\frac{1}{2}}\right]\right\},
\end{align}

\begin{proposition}
With the optimal discriminator $D_{\text {MIM }}^{*}(x)$, the optimal solution of the loss function $\mathcal{L}_{MIM}(D=D_{\mathrm{MIM}}^{*},G)$ is given by
\begin{align}\label{propose}
\mathcal{L}^{*}_{MIM}(D=D_{\mathrm{MIM}}^{*},G)=2\sqrt{e},
\end{align}
\end{proposition}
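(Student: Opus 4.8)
The plan is to start from the already-simplified expression for $\mathcal{L}_{\mathrm{MIM}}(D=D_{\mathrm{MIM}}^{*}, G)$ in \eqref{sldp2} and reduce each of its two expectation terms to a common integral. First I would rewrite each expectation over $P_R$ and $P_G$ as an integral over $\mathcal{X}$ against the corresponding density. Writing out the first term gives $\mathbb{E}_{x\sim P_R}[(P_R/P_G)^{-1/2}] = \int_{\mathcal{X}} P_R(x)\sqrt{P_G(x)/P_R(x)}\,dx = \int_{\mathcal{X}}\sqrt{P_R(x)P_G(x)}\,dx$, and the second term simplifies in exactly the same way. The key algebraic observation is therefore that both expectations collapse to the same overlap (Bhattacharyya-type) integral $B(P_R,P_G) := \int_{\mathcal{X}}\sqrt{P_R(x)P_G(x)}\,dx$, so that $\mathcal{L}_{\mathrm{MIM}}(D=D_{\mathrm{MIM}}^{*},G) = 2\sqrt{e}\,B(P_R,P_G)$.

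The next step is to optimize this expression over the generator, i.e.\ over admissible $P_G$. Since $P_R$ and $P_G$ are probability densities, each integrates to one over $\mathcal{X}$, so I would invoke the Cauchy--Schwarz inequality applied to $\sqrt{P_R}$ and $\sqrt{P_G}$ to obtain $B(P_R,P_G) = \int_{\mathcal{X}}\sqrt{P_R}\sqrt{P_G}\,dx \le \bigl(\int_{\mathcal{X}} P_R\,dx\bigr)^{1/2}\bigl(\int_{\mathcal{X}} P_G\,dx\bigr)^{1/2} = 1$, with equality precisely when $\sqrt{P_G}\propto\sqrt{P_R}$, which together with the normalization constraints forces $P_G = P_R$. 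Because the outer problem in \eqref{minmaxa12} is a maximization over the generator and $2\sqrt{e}\,B$ is increasing in $B$, the attainable optimum is $B=1$ at $P_G=P_R$; substituting back yields $\mathcal{L}^{*}_{\mathrm{MIM}}(D=D_{\mathrm{MIM}}^{*},G) = 2\sqrt{e}$.

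I expect the only genuine subtlety to be the bookkeeping about the direction of optimization. Lemma \ref{lemmma11} has already fixed $D$ as the inner optimizer of the convex $F(u)$, so here I must argue carefully that the remaining generator step is the maximization appearing in \eqref{minmaxa12} and that its optimum coincides with the point $P_G=P_R$ where the overlap integral $B$ is largest (value $1$). Once that is clear, the remainder is a routine reduction of the two expectations plus a single application of Cauchy--Schwarz, so no further technical machinery is required, and the constant $2\sqrt{e}$ emerges directly from the $\sqrt{e}$ prefactor in \eqref{sldp2} multiplied by the two identical overlap terms.
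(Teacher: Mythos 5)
Your proof is correct, and at the crucial step it is actually more explicit than the paper's own argument. The paper starts from the same expression \eqref{sldp2}, but instead of working with the overlap integral directly it rewrites $\max_{G}\mathcal{L}_{\mathrm{MIM}}(D=D_{\mathrm{MIM}}^{*},G)$ as the minimization of $\tfrac{\sqrt{e}}{2}\left\{R_{1/2}(P_{R}\|P_{G})+R_{1/2}(P_{G}\|P_{R})\right\}$, where $R_{1/2}$ is the Renyi divergence of order $\alpha=\tfrac{1}{2}$, and then simply asserts that this yields \eqref{propose}; the non-negativity of the Renyi divergence and the equality condition $P_{G}=P_{R}$ are used but never stated. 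Your route replaces that divergence machinery with the explicit identity $\mathcal{L}_{\mathrm{MIM}}(D=D_{\mathrm{MIM}}^{*},G)=2\sqrt{e}\,B(P_{R},P_{G})$, where $B(P_{R},P_{G})=\int_{\mathcal{X}}\sqrt{P_{R}(x)P_{G}(x)}\,dx$, together with the Cauchy--Schwarz bound $B\le 1$ and its equality case $P_{G}=P_{R}$. This is mathematically the same fact the paper invokes, since $R_{1/2}(P_{R}\|P_{G})=-2\ln B(P_{R},P_{G})$, so non-negativity of $R_{1/2}$ is exactly $B\le 1$; but your version is self-contained, proves rather than cites the inequality, and makes the attainment of the optimum explicit. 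What the paper's formulation buys instead is the thematic link to divergence measures (the M-I divergence framing central to MIM-GAN), at the cost of leaving the final step to the reader. Your attention to the direction of optimization — that in \eqref{minmaxa12} the generator maximizes, so the optimum sits where $B$ is largest — is also warranted, since the paper handles this only implicitly through the equivalence \eqref{po02}.
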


\begin{proof}
From \eqref{sldp2}, we have
\begin{align}\label{po02}
&\max _{G} \mathcal{L}_{\mathrm{MIM}}\left(D=D_{\mathrm{MIM}}^{*}, G\right) \nonumber\\
&\Leftrightarrow \min _{G} \frac{\sqrt{\mathrm{e}}}{2}\left\{R_{\alpha=\frac{1}{2}}\left(P_{R} \| P_{G}\right)+R_{\alpha=\frac{1}{2}}\left(P_{G} \| P_{R}\right)\right\},
\end{align}
where $R_\alpha(\cdot)$ is the Renyi divergence with $  \alpha = \frac{1}{2} $, which is defined as
\begin{align}\label{lspde}
R_\alpha(P_{R}\parallel P_{G})=\frac{1}{\alpha-1}\ln\left(\mathbb{E}_{\mathbf{x}\sim P}\left[{\left(\frac{P(\mathbf{x})}{P_{g_\mu}(\mathbf{x})}\right)}^{\alpha-1}\right]\right),
\end{align}
with $\alpha>0,\alpha\neq 1$.
Based on \eqref{po02} and \eqref{lspde}, we obtain the global maximum of $\mathcal{L}_{MIM}\ (D=D_{MIM}^\ast,G)$ as \eqref{propose}.
\end{proof}

\subsubsection{Resistance to Model Collapse}



The original loss function of GAN makes it difficult to train,  which leads to low diversity of generated samples in most cases. The traditional GANs use binary cross-entropy loss, which can cause the generator gradient to vanish and require careful tuning of the generator and discriminator parameters.
For $\mathrm{GAN}$, the optimal D is $D_{\mathrm{KL}}^{*}(x)=\frac{P_{\mathrm{R}}(x)}{P_{\mathrm{R}}(x)+P_{\mathrm{G}}(x)}$,
the optimization problem for the generator G is
\begin{align}
&\min _{G} \mathbb{E}_{z \sim P_{Z}}\left[\ln \left(1-D_{\mathrm{KL}}^{*}(G(z))\right)\right]\nonumber\\
&\Leftrightarrow \min _{G} \mathbb{E}_{x \sim P_{G}}\left[\ln \left(\frac{P_{\mathrm{G}}(x)}{P_{R}(x)+P_{G}(x)}\right)\right]\nonumber\\
&\Leftrightarrow\min _{G} \int_{\mathcal{X}} P_{G}(x)\left[\ln \left(\frac{P_{G}(x)}{P_{R}(x)}\right)\right] \mathrm{d} x,
\end{align}

The above representative optimization problem is equivalent to minimize the KL distance between $P_{G}$ and $P_{R}$. Therefore, we can observe that the generator of GAN network may not generate the true distribution of the temporal data for $\mathrm{P}_{\mathrm{G}}(x)\ \rightarrow\ 0$.

For MIM-GAN network, the optimization problem for the generator G is
\begin{align}
&\max _{G} \mathbb{E}_{Z \sim P_{Z}}\left[\exp \left(D_{\text {MIM }}^{*}(G(Z))\right)\right] \nonumber\\
&\Leftrightarrow \max _{G} \mathbb{E}_{Z \sim P_{Z}}\left[\operatorname { exp } \left(D_{\mathrm{MIM}}^{*}(x)\right.\right] \nonumber\\
&\Leftrightarrow \max _{G} \mathbb{E}_{Z \sim P_{Z}}\left[\exp \left(\frac{1}{2}+\frac{1}{2} \ln \frac{P_{R}(x)}{P_{\mathrm{g}}(x)}\right)\right] \nonumber\\
&\Leftrightarrow \max _{G} \sqrt{\mathrm{e}}\left\{\mathbb{E}_{x \sim P_{G}}\left[\left(\frac{P_{R}(x)}{P_{G}(x)}\right)^{\frac{3}{2}}\right]\right\} \nonumber\\
&\Leftrightarrow \max _{G} \int_{\mathcal{X}} P_{G}(x)\left(\frac{P_{R}(x)}{P_{G}(x)}\right)^{\frac{1}{2}} \mathrm{~d} x \nonumber\\
&\Leftrightarrow \max _{G} \int_{\mathcal{X}} P_{G}(x)^{\frac{1}{2}} P_{R}(x)^{\frac{1}{2}} \mathrm{~d} x,
\end{align}

When we optimize the generator, it is inevitable for the original GAN to cause that $\mathrm{P}_{G}(x)$ tends to be 0, which automatically results in the mode collapse problem. Based on \eqref{minmaxa12} and \eqref{sldoe4}, we convert the maxmin optimization problem into the equivalent minmax optimization problem. Therefore, when we maximize the generator, the loss function of MIM-GAN is set so that $\mathrm{P}_{G}(x)\ \rightarrow\ 0$ is avoided, thus it doesn't result in the model collapse problem.

\begin{algorithm}[tp]
     \caption{ Proposed MIM-GAN-based Anomaly Detection for Multivarite Time Serise Data }
      \label{Algorithm 1}
     {\bf{Input}}: {{Real Data}} $x$ and $X^{test}$, random latent space $z$ \\
     {\bf Loop}:\\
       1. \hspace*{0.15in} {\bf If} epoch within number of traning interations {\bf then}\\
          \hspace*{0.1in} {\bf For} the $N^{th}$ epoch do:\\
       2. \hspace*{0.1in} Generate samples from the random space:\\
        \hspace*{0.8in}$Z=\{z_i,i=1,\cdots,m \} \Rightarrow G_{\mathrm{lstm}}(Z)$\\
       3. \hspace*{0.1in} Calculate discrimination loss:\\
        \hspace*{0.8in}$X=\{x_i,i=1,\cdots,m\} \Rightarrow D_{\mathrm{lstm}}(X)$\\
       \hspace*{0.8in}$G_{\text{lstm}}(Z)\Rightarrow D_{\mathrm{lstm}}({G}_{{\mathrm{lstm}}}(Z))$\\
       4. {\bf Update D}: minimizing $D_{{loss}}$:\\
       \hspace*{0.5in} ${\min}\hspace*{0.05in}\frac{1}{m}\sum\limits_{i=1}^{m}[\{{\mathrm {exp}}(1-{{D}}_{{\mathrm{lstm}}}(x_i))]+{{\mathrm{exp}}}[{{D}}(G(z_i))]$\\
       5. {\bf Update generator}: Generator parameters by maximizing $G_{loss}$:\\
       $\hspace*{0.8in}\min\hspace*{0.05in}\frac{1}{m}\sum\limits_{i=1}^{m}{\mathrm{exp}}[D_{\mathrm{lstm}}(G_{\mathrm{lstm}}(z_i))]$\\
       6.\hspace*{0.15in} Record parameters of the D and G \\
       7. {\bf End for} \\
       8. \hspace*{0.15in} {\bf End if} \\
       9. For the $L$-th iteration do\\
          \hspace*{0.8in}$Z^k=\min\limits_{z}\hspace*{0.05in}{\mathrm{Err}}(X^{\text{test}},G_{\mathrm{lstm}}(Z^i))$\\
       10. End for\\
       11. \hspace*{0.1in} Caculate the reconstration score:\\
       $\hspace*{0.6in} \mathrm{Rec-Score(X^{\text{test}})= \sum_{i=1}^n |X^{\text{test}} - G_{\text{lstm}}(Z^{k})|}$ \\
       12. \hspace*{0.1in} Caculate the discrimination score:\\
       $\hspace*{0.6in} \mathrm{Dis-Score(X^{\text{test}})= D_{lstm}(X^{\text{test}})} $\\
       13. {\bf Obtain} the Anomaly Detection Loss:\\
        \hspace*{0.2in}$\text{AD-Loss}=\alpha\text{Rec-Score}(X^{\text{test}})+\beta\text{Dis-Score}(X^{\text{test}})$; \\
       14. {\bf For} $j,s$ and $t$ in ranges do \\
       15.\hspace*{0.15in} {\bf If} $j+s=t$ {\bf then}\\
       16.\hspace*{0.3in} $\text{DIRE-Score}=\frac{\sum_{j, s \in j+s=t} L_{j, s}}{\operatorname{coun} t(j, s \in j+s=t)}$ \\
       17.\hspace*{0.15in} {\bf  End if}\\
       18. {\bf  End for}\\
        {\bf End loop}\\
{\bf output}: AD-Loss, DIRE-Score
\end{algorithm}

\begin{figure*}[tp]
	\centering
	\vspace*{0.5em}
	\includegraphics[width=0.93\linewidth]{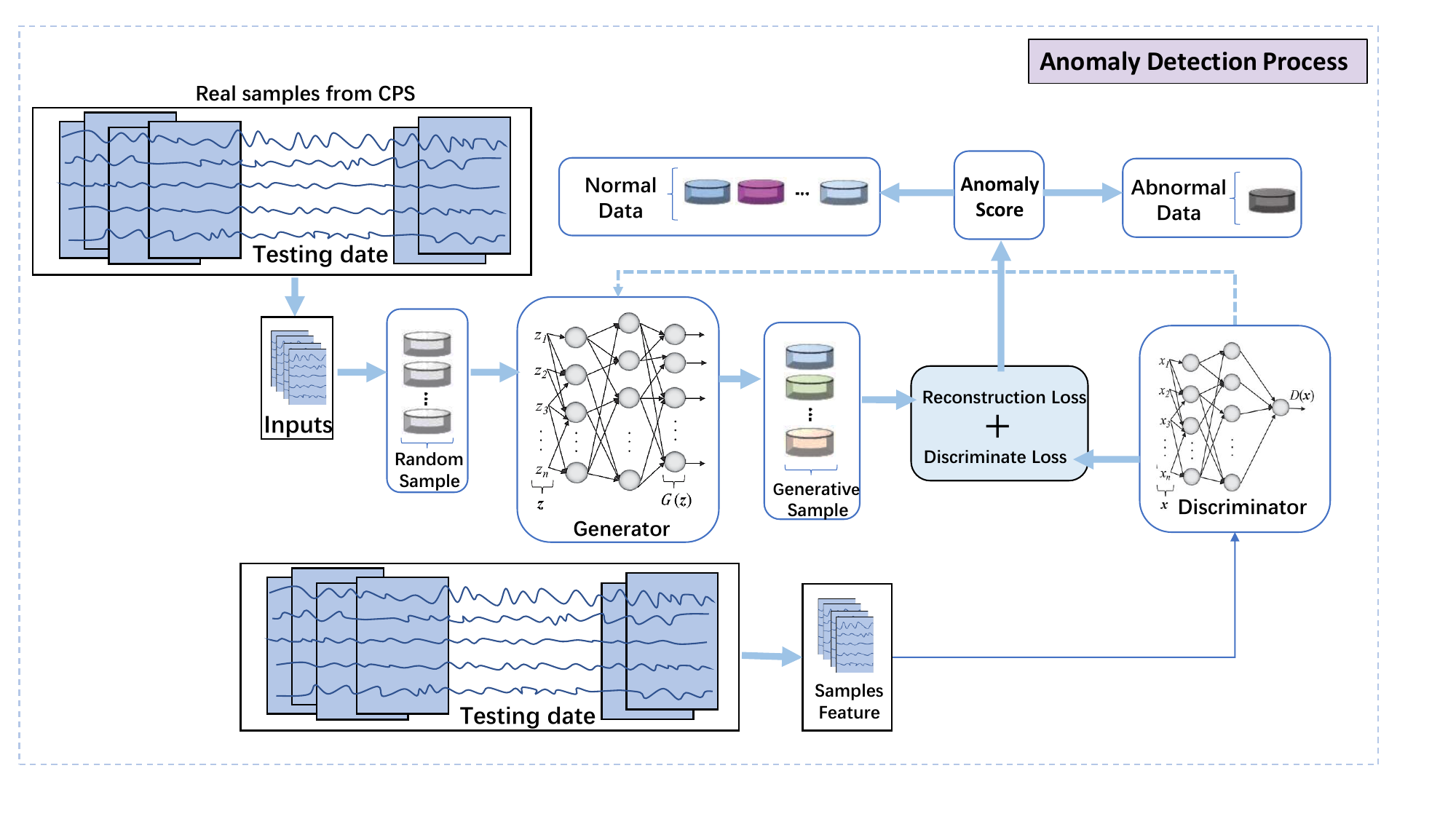}
	\vspace*{-0.5em}
	\caption{The proposed MIM-GAN-based anomaly detection for time series data.}\label{abfl45}
\end{figure*}
\subsection{MIM-GAN-based Anomaly Detection}


Fig. 3. shows the anomaly detection process of our presented an anomaly detection algorithm based on MIM-GAN. We first split the testing data into sub-sequences using a sliding window, and input the time series data into the trained G and D. Then we make use of the trained model to learn the features of the test data. In addition, we input each sub-sequence into the trained generator and discriminator and compute DIRE-Score based on the D-loss and G-loss. Finally, we label each sub-sequence as normal or abnormal based on a predefined threshold.

\subsubsection{Data Processing}

We utilize a large amount of historical or updated data collected from the real world as the input for training the GAN model. The discriminator and generator in the anomaly detection use the model parameters of the previously trained G and D. To perform anomaly detection on a testing dataset, we also divide the testing dataset into multivariate sub-sequences using a sliding window technique.
\subsubsection{Anomaly Score}
For anomaly detection, we leverage both the discriminator's discrimination score and the generator's reconstruction score. The discrimination error quantifies the discriminator's ability to classify the sub-sequence as real or fake, while the reconstruction loss assesses how well the generator can reconstruct the sub-sequence from a random input. By combining these two scores, we obtain a comprehensive metric that indicates the likelihood of the sub-sequence being anomalous. We assign a label to each sub-sequence in the test data as

\begin{align}
A^{\text {test}}=\left\{\begin{array}{lc}1, &\text { if } \textit{T}\left(\mathrm{DIRE-Score}, 1\right)>\tau, \\0, & \!\!\!\!\!\!\!\text { else },\end{array}\right.
\end{align}
where $\textit{T}\left(\mathrm{DIRE-Score}, 1\right)$ represents the cross-entropy loss, $A^{\text {test}}$ is a label vector for the test data as normal or abnormal, and it depends on whether the anomaly score is higher or lower than a threshold. We process the testing data through sliding windows to identify abnormal instances. Subsequently, these abnormal instances are assigned a label of 1 if their anomaly score surpasses the threshold. Similarly, testing data processed by sliding windows is labeled as normal and assigned a label of 0 if its anomaly score falls below the threshold.
The trained generator can generate sufficiently realistic data samples. If corresponding mappings can be found in the latent space of test set $D_{test}$, then the similarity between test set $D_{test}$ and reconstructed samples  $G(z)$  by generator reflects whether test set conforms to the data distribution reflected by generator to some extent. Therefore, the reconstruction residuals between test set and generated samples for identifying anomalies of test data can be expressed as
\begin{align}
\!\!\!\!\min_{Z^k} \text{Err}(X^{\text{test}}, G_{\text{lstm}}(Z^k))\!=\! 1\!-\!\text{Simi}(X^{\text{test}}, G_{\text{lstm}}(Z^k)),
\end{align}
where $X^{\text{test}}$ represents the testing data, $Z^k$ represents the random samples. In addition, the function $\text{Simi}(\cdot)$ are defined as
\begin{align}
\!\!\!\text { Simi }=\frac{A \cdot B}{\|A\|\|B\|}=\frac{\sum_{i=1}^{n} A_{i} \times B_{i}}{\sqrt{\sum_{i=1}^{n}\left(A_{i}\right)^{2}}\times \sqrt{\sum_{i=1}^{n}\left(B_{i}\right)^{2}}},
\end{align}
After enough iterations to achieve sufficiently small errors, $Z^k$ is recorded as a mapping in the latent time series data for its corresponding test sample. The residual calculation of test samples at time $t$ is
\begin{align}
\text{Rec-Score}(X^{\text{test}}) &= \sum_{i=1}^n |X^{\text{test}} - G_{\text{lstm}}(Z_t^{k})|,
\end{align}
where $X^{\text{test}}\subseteq R^n$ are $n$ variables measured. The anomaly detection loss (AD-Loss) is
\begin{align}
\!\!\!\text{AD-Loss} &= \!\!\alpha \text{Rec-Score}(X^{\text{test}}) \!+\! \beta \text{Dis-Score}(X^{\text{test}}),
\end{align}
where $\alpha+\beta=1, \alpha>0, \beta>0$.

Subsequently, the discriminator generates a score, termed ``Dis-Score", and the generator generates a score, known as ``Rec-Score", both of which are utilized to compute a set of anomaly detection losses.
We calculate a DIRE-Score by mapping the losses of anomaly detection for subsequence's back to original time series, which is defined as
\begin{align}
\text{DIRE-Score}=\frac{\sum_{j, s \in j+s=t} L_{j, s}}{\operatorname{coun} t(j, s \in j+s=t)},
\end{align}
where $t\in{1,2,...,N}$, $j\in{1,2,...,n}$, and $s\in{1,2,...,S_w}$.

The proposed MIM-GAN-based anomaly detection is summarized in Algorithm 1.
We use mini-batch stochastic optimization based on AdamW Optimizer and Gradient Descent Optimizer to update model parameters in this work.

\section{Experiment RESULTS}
In this section, we carry out experiments on the KDDCUP99 dataset and compare it with traditional machine learning algorithms including popular unsupervised anomaly detection algorithms. Furthermore, we investigate the impact of different subsequence lengths on model performance. Specifically, we set the subsequence lengths to 30, 60, 90, 120 and 150 to assess the stability of MIM-GAN-based anomaly detection for multivariate time series data.
The precision (Pre), recall (Rec) and F1-score for evaluating the anomaly detection performance of MIM-GAN. Specially, the Pre, Rec, and F1-score are defined as
\begin{align}
\!\!\mathrm{Pre} = \frac{\textit{TP}}{\textit{TP} + \textit{FP}},
\mathrm{Rec} = \frac{\textit{TP}}{\textit{TP} + \textit{FN}},
\mathrm{F1} = 2\cdot \frac{\textit{Pre} \cdot \textit{Rec}}{\textit{Pre} + \textit{Rec}},
\end{align}
where $\textit{TP}$, $\textit{FP}$ and $\textit{FN}$ are true positive, false positive, and false negative, respectively.
The experimental results, as shown in Table 1, demonstrate a significant improvement in our model compared to traditional machine learning methods such as Principal Component Analysis (PCA)\cite{8802933}, K-Nearest Neighbors (KNN)\cite{3880288}, Autoencoder (AE)\cite{8802301}, Local Outlier Factor (LOF)\cite{8594897},  Isolation Forest (IF)\cite{Liu08isolationforest}, One-Class Support Vector Machine (ONE-SVM)\cite{9064715}, Graph Deviation Network (GDN)\cite{deng2021graph}, Deep autoencoding Gaussian Mixture Model (DAGMM)\cite{2018Deep}, Multivariate Anomaly Detection with Generative Adversarial Networks (MAD-GAN)\cite{2306789}. The optimal parameters of the model are as follows: the subsequence length (Seq-length) is 90, the batch size is 512 in the training of all neural networks. The network optimizer is chosen as AdamW, the initial learning rate ($\eta$) is 0.0005. Compared with unsupervised anomaly detection algorithm MAD-GAN, our algorithm has obvious advantages in terms of precision, recall and F1-score. The accuracy is boosted by 5 percent, recall by 1.8 percent, and F1-score by 1 percent. Interestingly, the precision, recall rate, and F1-score values of the detection results exhibit relatively uniform performance.

As illustrated in Fig. \ref{abfl45}, we present a comparison of the generated samples at different training stages. In the early stages, the samples generated by GAN appear to be highly random. However, as the training progresses, the generator gradually adopts the distribution of the original samples, resulting in the generated samples closely resembling the actual data. Towards the later stages of training, the generated samples achieve a significantly improved level of similarity to the original data, almost approaching perfection.

To ensure accurate prediction accuracy and recall rate, as well as meet the practical detection requirements, an appropriate lengths of subsequence is needed. It is evident that the performance is relatively favorable during the initial iterations, particularly when the lengths of subsequence are set as 90 from Fig. \ref{abfl}.
The subsequence length is too small, it could lead to low temporal correlation of the window sequence, losing a lot of information on temporal relationship.\\
\begin{table}
\caption{Metrics OF ANOMALY DETECTION ON KDDCUP99}\label{psnr1}
\centering
\begin{tabular}{cccccc}
\cline{1-5}
\multirow{2}*{}&\multirow{2}*{Algorithms}&\multirow{2}*{Precision}&\multirow{2}*{Recall}&\multirow{2}*{F1-Score}\\
\multicolumn{2}{c}{} \\
\cline{1-5}
\multirow{11}*{}
&FB&48.98&19.36&0.28\\
&PCA&60.66&37.69&0.47\\
&KNN&45.51&18.98&0.53\\
&AE&$\mathbf{80.59}$&42.36&0.55\\
&LOF&75.55&52.54&$\mathbf{0.63}$\\
&IF&78.01&70.16&$\mathbf{0.74}$\\
&ONE-SVM&80.72&73.38&$\mathbf{0.77}$\\
&GDN&85.91&85.20&$\mathbf{0.85}$\\
&DAGMM&90.71&80.52&$\mathbf{0.86}$\\
&MAD-GAN*&$\mathbf{94.92}$&19.14&0.32\\
&MAD-GAN**&81.58&$\mathbf{94.92}$&0.88\\
&MAD-GAN***&86.91&94.79&$\mathbf{0.90}$\\
&$\mathbf{MIM}$-$\mathbf{GAN}$*&$\mathbf{99.89}$&2.3&0.05\\
&$\mathbf{MIM}$-$\mathbf{GAN}$**&83.44&$\mathbf{96.73}$&0.90\\
&$\mathbf{MIM}$-$\mathbf{GAN}$***&95.81&86.71&$\mathbf{0.91}$\\
\cline{1-5}
\end{tabular}
\end{table}
\begin{table}
\caption{Experiment Parameters}\label{psnr1}
\centering
\begin{tabular}{cccccc}
\cline{1-5}
\multirow{2}*{}&\multirow{2}*{Algorithms}& &\multirow{2}*{Parameters}\\
\multicolumn{2}{c}{} \\
\cline{1-5}
\multirow{11}*{}
&$\mathbf{MIM}$-$\mathbf{GAN}$*&Seq-length=30&batch-size=256&$\eta$=0.0001\\
&$\mathbf{MIM}$-$\mathbf{GAN}$**&Seq-length=60&batch-size=256&$\eta$=0.0001\\
&$\mathbf{MIM}$-$\mathbf{GAN}$***&Seq-length=90&batch-size=512&$\eta$=0.0005\\
\cline{1-5}
\end{tabular}
\end{table}
\section{CONCLUSION}
\vspace{0.6em}
\begin{figure*}[tp]
	\centering
	\vspace*{0.5em}
	\includegraphics[width=0.8\linewidth]{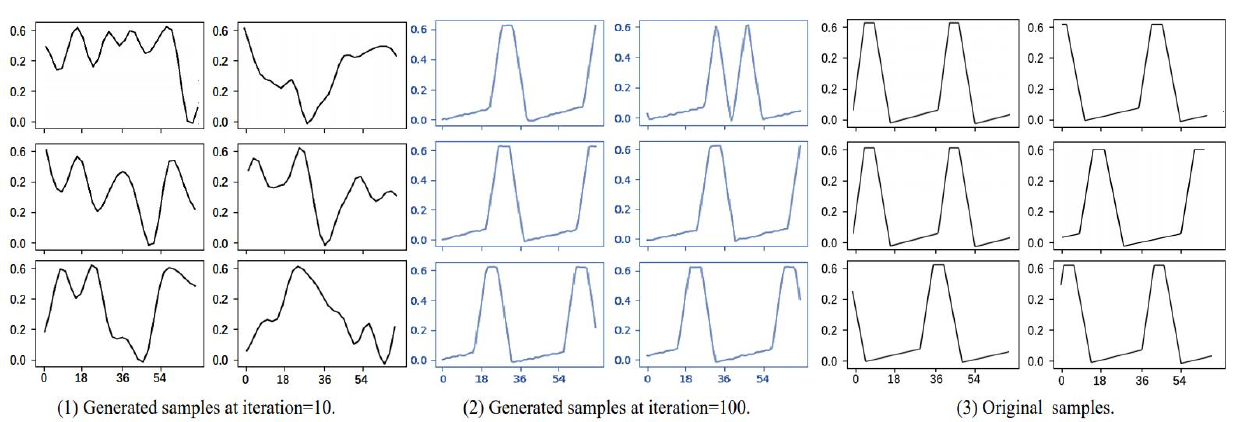}
	\vspace*{-0.5em}
	\caption{Comparison of the generated samples at different training stages.}\label{abfl45}
\end{figure*}
\begin{figure}[tp]
	\centering
	\vspace*{0.5em}
	\includegraphics[width=0.639\linewidth]{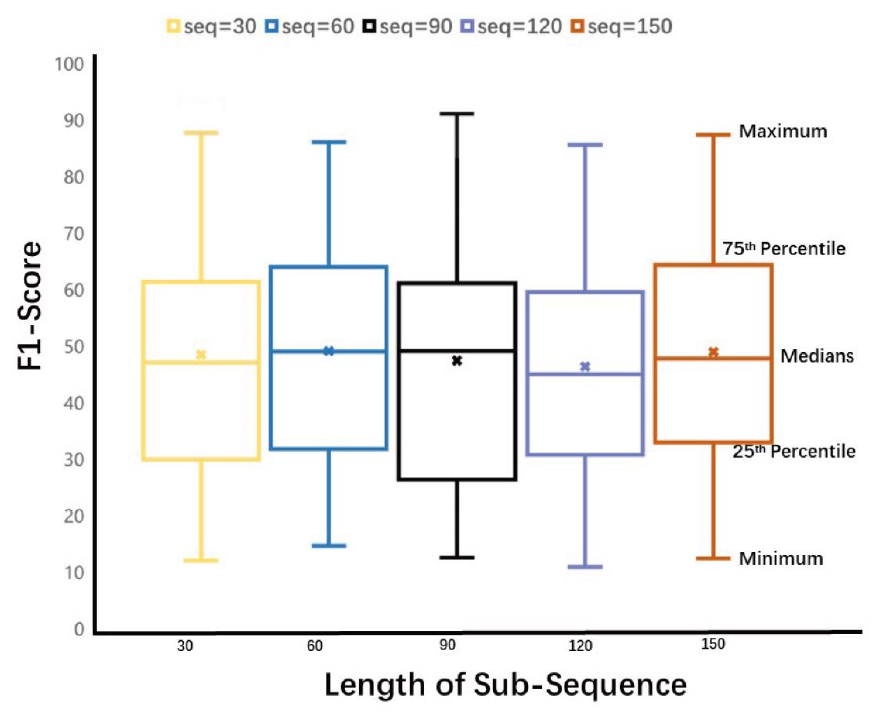}
	\vspace*{-0.5em}
	\caption{F1-Score of MIM-GAN-based anomaly detection with different lengths of sub-sequences Length.}\label{abfl}
\end{figure}
In this paper, we adopted an exponential form of loss function by referring to Message Importance Measure (MIM), we used MIM-GAN algorithm for anomaly detection of multivariate time series data, including the MIM-GAN training process and anomaly detection process. Specially, we used the sliding window to preprocess the time series data and exploited the temporal and spatial correlations of the time series by the LSTM , which captures the temporal relationship and the dependency among variables. Furthermore, we theoretically analyzed the advantages of MIM-GAN in terms of global optimal solution of loss function and the model collapse. The conducted experiments shown the excellent precision, recall, and F1-score of the proposed MIM-GAN-based anomaly detection algorithm.
\section{ACKNOWLEDGMENT}
 \vspace{0.6em}
 This work was supported by Science and Technology Major Project of Tibetan Autonomous Region of China (XZ202201ZD0006G04), National Key Research and Development Program of China (Grant No. 2020YFC0833406), Key Research and Development and Transformation Plan of Science and Technology Program for Tibet Autonomous Region China (No. XZ201901-GB-16), Famous teachers workshop project of Tibet University China, Central government China supports the reform and development of local universities of Tibet University in 2020. The code is available at https://github.com/ExplorerLu1024/MIMAD-GAN

\vspace{0.6em}
\bibliographystyle{IEEEtran}
\bibliography{IEEEfull,trans}
\vspace{0.6em}

\end{document}